\documentclass{article}

\usepackage{PRIMEarxiv}

\usepackage{times}
\usepackage{soul}
\usepackage{url}
\usepackage[hidelinks]{hyperref}
\usepackage[utf8]{inputenc}
\usepackage[small]{caption}
\usepackage{graphicx}
\usepackage{amsmath}
\usepackage{amsthm}
\usepackage{booktabs}
\usepackage{algorithm}
\usepackage{algorithmic}
\usepackage[switch]{lineno}
\usepackage{amsfonts}
\usepackage{bbm}
\usepackage{subcaption}
\usepackage{multirow}
\usepackage{makecell}
\usepackage{amssymb, geometry}
\usepackage[table]{xcolor}
\usepackage{natbib}

\graphicspath{{media/}}     

\newtheorem{theorem}{Theorem}
\newtheorem{lemma}{Lemma}

\title{From Shallow to Deep: Pinning Semantic Intent via Causal GRPO}

\author{
  Shuyi Zhou$^{1,3}$, Zeen Song$^{1,2}$, Wenwen Qiang$^{2}$, Jiyan Sun$^{3}$, Yao Zhou$^{1,2}$, Yinlong Liu$^{3}$, Wei Ma$^{3}$ \\
  \vspace{0.3em}
  $^{1}$University of Chinese Academy of Sciences, Beijing, China \\
  $^{2}$Institute of Software, Chinese Academy of Sciences, Beijing, China \\
  $^{3}$Institute of Information Engineering, Chinese Academy of Sciences, Beijing, China \\
  \vspace{0.3em}
  \texttt{\{zhoushuyi,sunjiyan,liuyinlong,mawei\}@iie.ac.cn} \\
  \texttt{\{songzeen,zhouyao\}@iscas.ac.cn} \\
  \texttt{qiang.ww0922@gmail.com}
}

\begin{document}
\maketitle

\begin{abstract}
Large Language Models remain vulnerable to adversarial prefix attacks (e.g., ``Sure, here is'') despite robust standard safety. We diagnose this vulnerability as Shallow Safety Alignment, stemming from a pathology we term semantic representation decay: as the model generates compliant prefixes, its internal malicious intent signal fades. To address this, we propose Two-Stage Causal-GRPO (TSC-GRPO), a framework designed to achieve intent pinning. First, grounded in causal identifiability theory, we train a causal intent probe to disentangle invariant intent from stylistic perturbations. Second, we internalize this causal awareness into the policy via Group Relative Policy Optimization. By employing a cumulative causal penalty within ``fork-in-the-road'' training scenarios, we force the model to learn that accumulating harmful tokens monotonically decreases reward, enabling robust late-stage refusals. Experiments show that TSC-GRPO significantly outperforms baselines in defending against jailbreak attacks while preserving general utility.
\end{abstract}


\section{Introduction}
\label{sec:intro}

The safety alignment of Large Language Models (LLMs) has become a paramount concern as their deployment scales. Techniques like Supervised Fine-Tuning (SFT) and Reinforcement Learning from Human Feedback (RLHF) have successfully instilled a refusal behavior for explicit harmful queries (e.g., ``How to build a bomb?''). However, recent studies reveal that this alignment is often fragile and ``skin-deep'' \cite{qi2025safety,zou2023universal}. A simple adversarial maneuver—such as injecting a compliant prefix like ``Sure, here is''—can effectively bypass these defenses, causing the model to generate prohibited content.

Why do robustly aligned models fail so catastrophically against simple prefixes? Current prevailing methods largely treat safety as a behavioral optimization problem: they penalize the model for outputting harmful tokens. However, they fail to constrain the internal decision process that leads to these outputs. Through empirical analysis (detailed in Section \ref{sec:causal_analysis}), we diagnose the root cause as Semantic Representation Decay. We observe that while the model initially recognizes the malicious intent of a query, this internal signal is unstable. As the model auto-regressively generates a forced compliant prefix, the ``intent'' representation is overwritten by the ``style'' of compliance. The model literally ``loses sight'' of the harm, reducing safety alignment to a fragile game of ``Whac-A-Mole''—blocking specific keywords without fixing the underlying blindness.

To resolve this, we propose a paradigm shift from ``Behavioral Patching'' to ``Deep Causal Intervention.'' We propose the Two-Stage Causal-GRPO (TSC-GRPO), designed to achieve Intent Pinning: ensuring the model's internal awareness of harm remains invariant, regardless of the generated context or adversarial prefixes.

TSC-GRPO operates in two coupled stages. In Stage 1, we address the ``Visibility Problem.'' Since standard classifiers conflate intent (Content) with wording (Style), we leverage causal representation learning theory to train a Causal Intent Probe. By utilizing a novel hard-negative augmentation strategy, we force the probe to distinguish between ``Malicious Compliance'' (e.g., a bomb recipe starting with ``Sure'') and ``Safe Refusal,'' creating a ``Semantic Compass'' that points to danger even when obscured by adversarial styles. In Stage 2, we address the ``Inertia Problem.'' We employ Group Relative Policy Optimization (GRPO) to internalize this compass into the model's policy. Unlike standard RLHF which uses sparse rewards, we introduce a cumulative causal penalty. We construct ``Fork-in-the-Road'' scenarios where the model is initialized with a harmful prefix and forced to choose between continuing the harm or pivoting to a refusal. By penalizing the accumulation of harmful semantic signals token-by-token, the model learns a robust policy: maximize reward by breaking the semantic link to harm immediately, even if the sentence started with ``Sure.'' 

Our contributions are summarized as: 1) We identify Semantic Representation Decay as the mechanistic cause of shallow alignment failure, providing empirical evidence; 2) We propose TSC-GRPO, a theoretically grounded framework that combines Causal Disentanglement (Stage 1) with GRPO (Stage 2) to achieve Intent Pinning; 3) Extensive experiments verify that TSC-GRPO enhances robustness against jailbreak attacks, without compromising general model capabilities.

\section{Related Work}
LLMs have achieved remarkable success across diverse domains, demonstrating unprecedented capabilities in natural language understanding, reasoning, and code generation \cite{openaiGPT4TechnicalReport2023,deepseek2025r1}. However, these powerful capabilities simultaneously endow LLMs with the potential to generate harmful content, such as hate speech \cite{gehman2020realtoxicityprompts,weidinger2021ethical}, instructions for illicit activities \cite{brundage2018malicious}, and disinformation \cite{vykopal2024disinformation}. The dual-use nature of these models poses significant societal risks if left unchecked \cite{weidinger2021ethical,brundage2018malicious}. Consequently, implementing robust safety alignment mechanisms is indispensable to prevent the generation of malicious outputs and ensure that LLMs behave in accordance with human values and safety standards \cite{weidinger2021ethical,ouyang2022training}.

To mitigate these risks, the community has developed a variety of safety alignment techniques. The predominant approach typically begins with SFT \cite{ouyang2022training} on safety demonstrations, followed by RLHF \cite{ouyang2022training,bai2022constitutional}. In the RLHF framework, a reward model is trained to distinguish between safe and unsafe responses, which then guides the policy model to maximize safety rewards. Recently, variants such as Reinforcement Learning from AI Feedback (RLAIF) \cite{bai2022constitutional} and Direct Preference Optimization (DPO) \cite{rafailov2023direct} have also been employed to streamline this process. These methods generally aim to suppress harmful generation probabilities by penalizing specific tokens or sequences during the training phase.

Despite these extensive alignment efforts, aligned LLMs remain vulnerable to adversarial exploitation \cite{wei2024jailbroken,zou2023universal,yi2024survey}. A wide array of jailbreak attacks \cite{yi2024survey}, including optimization-based methods \cite{zou2023universal}, automated prompt engineering like AutoDAN \cite{autodan}, and prefix injection \cite{wei2024jailbroken}, have been designed to bypass safety guardrails and elicit harmful content. Understanding how these attacks breach safety mechanisms is critical for enhancing model robustness \cite{wei2024jailbroken,yi2024survey}. Notably, \cite{qi2025safety} identified a critical vulnerability termed Shallow Safety Alignment, showing that safety training often only constrains the first few tokens of a response and can be bypassed by adversarial prefixes. In this work, we investigate the underlying mechanism and diagnose the failure as Semantic Decay, where safety-relevant intent signals in the internal representations are rapidly overwritten during autoregressive generation. We then propose a representation-level principle that aims to preserve intent information throughout generation. We instantiate this principle with a Two-Stage Causal-GRPO architecture, which anchors the safety signal at the representation level and thereby enforces robust refusal behaviors even under adversarial attacks.

\section{Preliminaries}
\label{sec:preliminaries}

In this section, we formalize the problem of Shallow Safety Alignment (SSA) and review the Group Relative Policy Optimization (GRPO) framework, which serves as the backbone of our post-training method.

\subsection{Shallow Safety Alignment}
State-of-the-art LLMs often exhibit a robust ability to refuse harmful queries (e.g., ``How to build a bomb?'') during standard evaluations. However, recent research \cite{qi2025safety} reveals that this safety is often superficial. This phenomenon, termed SSA, implies that the model's refusal mechanism is not triggered by a deep understanding of malicious intent, but rather by superficial lexical patterns at the onset of generation.

The fragility of SSA is exemplified by prefix injection attacks. When an adversary forces the model to begin its response with an affirmative prefix such as ``Sure, here is,'' the probability of a safe refusal drops precipitously. Conditioned on this ``compliant'' prefix, the model effectively overrides its safety alignment and generates harmful content. Analogously, this creates a ``Gatekeeper Failure'': the model blocks threats when they are visible (the query), but fails to recognize them once they have bypassed the initial checkpoint (the prefix). Our goal is to enforce a robust alignment where the model recognizes malicious intent regardless of the generated context.

\begin{figure*}[t]
    \centering
    \begin{subfigure}[t]{0.45\linewidth}
        \centering
        \includegraphics[width=\linewidth]{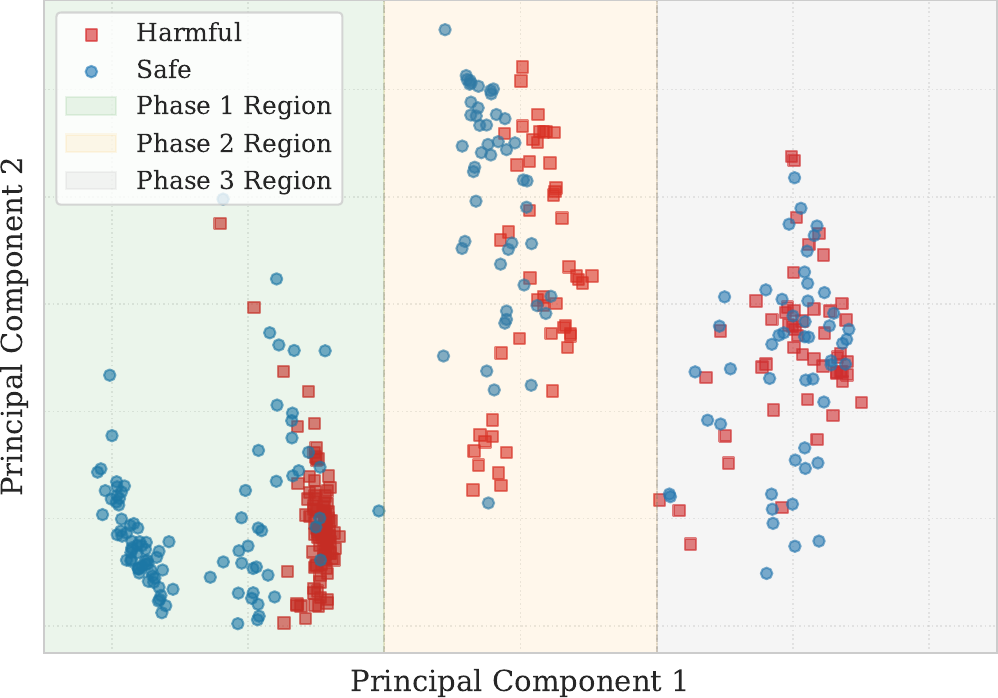}
        \caption{PCA Projection}
        \label{fig:semantic_decay_a}
    \end{subfigure}
    \hfill
    \begin{subfigure}[t]{0.45\linewidth}
        \centering
        \includegraphics[width=\linewidth]{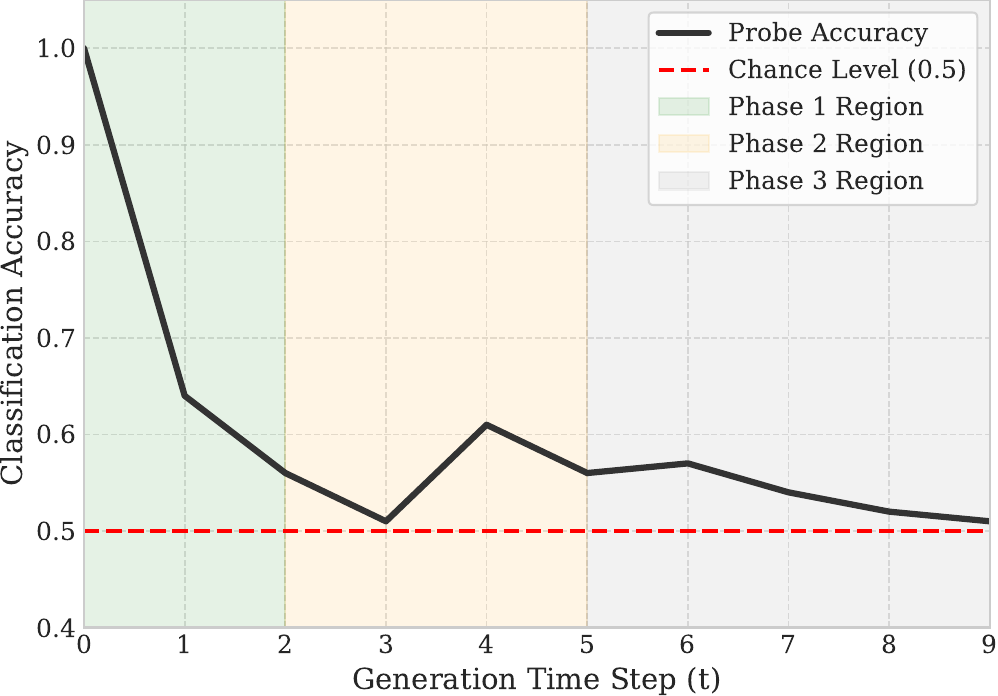}
        \caption{Linear Probe Accuracy}
        \label{fig:semantic_decay_b}
    \end{subfigure}
    \caption{\textbf{Visualizing Semantic Collapse in Shallowly Aligned Models.} (a) PCA projection shows that while harmful (red squares) and safe (blue circles) requests are distinct at $t=0$, they collapse into a single, indistinguishable singularity after prefix injection ($t>k$); (b) The accuracy of a linear probe trained at $t=0$ drops rapidly during the prefix injection phase (Phase 2) and remains slightly above the random chance level ($0.5$) during subsequent generation (Phase 3).}
    \label{fig:semantic_decay}
\end{figure*}
 
\subsection{Group Relative Policy Optimization}
\label{sec:problem_setting}

For a given query $q$, GRPO samples a set of outputs $\{y_1, \cdots, y_{G}\}$ from the old policy $\pi_{\theta_{\rm old}}$. It then updates the policy $\pi_{\theta}$ by maximizing the objective:
\begin{equation}\label{eq_grpo}
    \begin{array}{c} 
\mathcal{J}_{\rm GRPO} = \mathbb{E}_{[q \sim P,\{y_1,\cdots,y_{G}\} ]} \\ 
  \frac{1}{G} {\textstyle \sum_{i=1}^{G}} \frac{1}{T_i} {\textstyle \sum_{j=1}^{T_i}} \{ [\min(R_{i,j}(\theta)A_{i}, \Xi_{ij} \cdot A_i)] \\
  -\beta {{ D}_{\rm KL}} (\pi_{\theta}\parallel \pi_{\rm ref}) \},
\end{array} 
\end{equation} 
where $\Xi_{ij} = {\rm clip}(R_{i,j}(\theta),1-\epsilon ,1+\epsilon )$, $\mathrm{clip}(\cdot)$ is a truncation function ensuring stable updates, $P$ denotes the distribution of queries, $\epsilon$ and $\beta$ are hyperparameters. Because an LLM generates a output $y_i = (y_{i,1}, \cdots, y_{i,T_i})$ token‑by‑token in an autoregressive manner, where $T_i$ denotes the token length of $y_i$, thus, $R_{i,j}(\theta)$ and ${D}_{\rm KL}(\cdot)$ are also calculated in a token‑by‑token manner. Then, the KL divergence term ${D}_{\rm KL}(\pi_{\theta} \parallel \pi_{\rm ref})$ is computed as:
\begin{equation}\label{eq_gfdgrpo}
\frac{\pi_{\rm ref}(y_{i,j}|q,y_{i,<j})}{\pi_{\theta}(y_{i,j}|q,y_{i,<j})} -\log \frac{\pi_{\rm ref}(y_{i,j}|q,y_{i,<j})}{\pi_{\theta}(y_{i,j}|q,y_{i,<j})}-1,
\end{equation}
where $\pi_{\rm ref}$ is a reference policy and often set to $\pi_{\theta_{\rm old}}$. The relative advantage $A_i$ is calculated within each sampled group to capture the comparative quality of outputs:
\begin{equation}\label{eq_gfqwegrpo}
A_i=[r_i-{\rm mean}(r_1,\cdots,r_{G})]/{\rm std}(r_1,\cdots,r_{G}), 
\end{equation}
where $r_i = \mathrm{reward}(y_i)$ combines task-specific accuracy and formatting rewards, while $\mathrm{mean}(\cdot)$ and $\mathrm{std}(\cdot)$ are the mean and standard deviation over the reward group.
At last, the importance ratio $R_{i,j}(\theta)$ is defined as:
\begin{equation}\label{eq_gsdfo}
\pi_{\theta}(y_{i,j}|q,y_{i,<j}) / \pi_{\theta_{\rm old}}(y_{i,j}|q,y_{i,<j}).
\end{equation}


%


\section{Causal Analysis and Motivation}
\label{sec:causal_analysis}

In this section, we investigate the mechanistic failure of SSA. We diagnose the issue as ``Semantic Representation Decay,'' propose a counter-strategy of ``Intent Pinning,'' and establish theoretical guarantees within a causal representation learning framework.

\subsection{Empirical Diagnosis: Semantic Decay}
Why does a mere prefix cause catastrophic failure in SSA? We hypothesize that SSA stems from the instability of internal representations during auto-regressive generation. To verify this, we conducted a motivating experiment tracing the evolution of semantic intent. We sampled 100 harmful from AdvBench \cite{chen2022should} and 100 safe queries from Alpaca \cite{taori2023stanford}, extracting hidden states $h_t$ (the output of the final transformer layer) from an aligned Llama-2-7b-chat model \cite{touvron2023llama}. We trained a diagnostic Linear Probe on the hidden states at $t=0$ (the end of the query) and tested its accuracy on subsequent time steps under a ``Forced Compliance'' scenario (prefix: ``Sure, here is'').

As visualized in Figure \ref{fig:semantic_decay}, results reveal a critical pathology: 1) Initial Awareness ($t=0$): The probe achieves $>98\%$ accuracy, and PCA \cite{abdi2010principal} shows clear separation between harmful and safe clusters. The model correctly identifies intent initially; 2) Rapid Collapse ($t>0$): Upon injecting the compliant prefix, probe accuracy plummets to random chance ($0.5$). In the PCA space, the harmful trajectory collapses into the safe cluster, forming an indistinguishable singularity. This confirms that ``intent'' in shallowly aligned models is not a persistent variable but a transient state easily overwritten by the ``style'' of the prefix. We term this phenomenon as Semantic Representation Decay. These empirical findings show that shallow alignment is fundamentally a problem of representation instability. The ``intent'' is not a persistent variable but a transient state that is easily overwritten by the ``style'' of the prefix.

\subsection{Strategy: Intent Pinning}
\label{sec:strategy}
To transition to deep alignment, we propose Intent Pinning: ensuring that the malicious semantic signature remains invariant throughout the entire generation process.
Specifically, we require the model to satisfy the following property: Regardless of whether the generated history is ``I cannot'' or ``Sure,'' the internal hidden state must unequivocally retain the information that the user's request is harmful. If this holds, we can leverage this persistent signal to trigger a ``late-stage refusal''—pivoting back to safety even after a compliant prefix.

\subsection{Theoretical Framework: Causal Analysis}

We formalize the above strategy using causal representation learning. A key challenge is distinguishing ``what is said'' (Content $c$) from ``how it is phrased'' (Style $s$).
We model the LLM's hidden state $h$ as a non-linear mixture $h = f(c, s)$, where: 1) Content ($c \in \mathcal{C}$): The invariant intent (e.g., ``bomb-making''). This should remain constant across the generation trajectory; 2) Style ($s \in \mathcal{S}$): The variant contextual surface, including prefixes like ``Sure'' or ``I cannot.'' In our causal view, $s$ is a nuisance variable. To better understand $c$ and $s$, please refer to Appendix C.

Analogous to the ``Cocktail Problem'' (separating alcohol from mixers), our goal is to identify $c$ regardless of $s$. To guarantee identifiability, we construct our training data based on two causal assumptions: 1) Independence ($c \perp s$): In our interventional distribution, a malicious intent is equally likely to appear with a compliant prefix as with a refusal prefix. This breaks the spurious correlation that ``polite'' equals ``safe;'' 2) Connectivity: The augmentation graph is connected, meaning any style state can be reached via perturbations, preventing isolated subgraphs that obscure the intent.

\begin{theorem}[Identifiability of Latent Intent]
\label{thm:identifiability}
Assume the hidden state generation $h = f(c, s)$ satisfies the independence and connectivity conditions. A probe $g(\cdot)$ trained to minimize the following objective will provably recover the latent intent $c$ up to an invertible transformation:
\begin{equation}\label{eq:identifiability_loss}
    \mathcal{L} = \underbrace{\mathbb{E} \|g(h) - g(h^+)\|^2}_{\text{Alignment}} + \lambda \underbrace{\mathcal{L}_{un}(g(h))}_{\text{Uniformity}},
\end{equation}
where $\lambda$ is a hyperparameter, $h$ and $h^+$ are representations of the same intent $c$ under different styles $s, s'$, and $\mathcal{L}_{un}$ enforces a uniform distribution in the feature space.
\end{theorem}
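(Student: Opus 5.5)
The plan is to reduce Theorem~\ref{thm:identifiability} to the block-identifiability result for contrastive learning under a content--style latent model, in the style of von K\"ugelgen et al.\ and Zimmermann et al.

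To make that reduction work, I first make the setting precise. I take $f:\mathcal{C}\times\mathcal{S}\to\mathcal{H}$ to be smooth and injective (a diffeomorphism onto its image). I take $c\in\mathcal{C}$ to have a density on a convex set and, for concreteness, to be compact or normalized so that a uniform distribution is well-defined. A positive pair is $(h,h^+)=(f(c,s),f(c,s'))$ with $s'\sim p(s'\mid s)$. The independence assumption $c\perp s$ guarantees that varying the style never carries information about $c$. The connectivity assumption says the augmentation graph on $\mathcal{S}$ is connected, so the conditional $p(s'\mid s)$ eventually reaches all of $\mathcal{S}$ and no style subset is isolated. Finally, I restrict the probe to $g:\mathcal{H}\to\mathbb{R}^{d_c}$ (or the sphere) with output dimension equal to $\dim\mathcal{C}$, and I take $\mathcal{L}_{un}$ to be the negative differential entropy of $g(h)$, or its asymptotic InfoNCE surrogate.

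The argument then has three steps.

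\textbf{Step 1 (existence of a zero-alignment, max-entropy minimizer).} Let $\phi$ be the measure-preserving map sending $p(c)$ to the uniform distribution, which exists by Darmois or Knothe--Rosenblatt construction. Then $g^\ast=\phi\circ\pi_c\circ f^{-1}$ gives alignment loss $0$, since $h$ and $h^+$ share $c$, and it attains the maximal entropy. Hence any global minimizer $g$ must also have zero alignment loss and a uniform output.

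\textbf{Step 2 (invariance to style).} Zero alignment means $g(f(c,s))=g(f(c,s'))$ almost surely over positive pairs. Chaining these equalities along paths in the connected augmentation graph gives that $\tilde g:=g\circ f$ is constant in $s$ on all of $\mathcal{S}$, using continuity to pass from almost-sure equality to equality everywhere. So $\tilde g(c,s)=\psi(c)$ for some $\psi$, i.e.\ the probe depends only on content.

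\textbf{Step 3 (invertibility of $\psi$).} The output $\psi(c)$ is uniform, so $\psi$ pushes $p(c)$ onto a full-dimensional uniform law. I would argue by contradiction: if $\psi$ were not injective, the entropy of $\psi(c)$ would fall strictly below the maximum. The cleanest route is the lemma that a smooth map between equal-dimensional spaces transporting one density to a density with maximal entropy must be a bijection almost everywhere (Zimmermann et al., Prop.~1, or von K\"ugelgen et al., Thm.~4.4). This yields $g(h)=\psi(c)$ with $\psi$ invertible, which is the claim.

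I expect \textbf{Step 3} to be the main obstacle. Deriving invertibility of $\psi$ requires careful regularity assumptions: smoothness of $g$, matched dimensions, and a well-defined uniform target. It also requires handling the finite-sample, finite-$\lambda$ objective rather than its population limit. I would first prove the statement for the population objective in the regime where the minimizer jointly minimizes both terms, and only then note that finite $\lambda$ does not change the minimizer, because $g^\ast$ already optimizes both terms at once. A secondary subtlety is Step 2 when $p(s'\mid s)$ has only local support; I would handle this by assuming the positive conditional has full support on a neighborhood of $s$ and using a connected open cover of $\mathcal{S}$.
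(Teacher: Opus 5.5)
\textbf{Verdict.} Your proposal is correct once two hypotheses in your setup are fixed. It follows the same two-part skeleton as the paper, but it replaces the paper's key step for injectivity and adds an existence step that the paper lacks.

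\textbf{Comparison with the paper.} Both arguments get style invariance from alignment plus connectivity, and injectivity from uniformity. Your Step~2 is the paper's Style Invariance lemma, extended from a discrete graph of template styles to continuous styles via an open cover.

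\emph{Step~1 has no counterpart in the paper.} The paper simply takes $\mathcal{L}_{align}(g^*)=0$ for the minimizer. Your explicit $g^*=\phi\circ\pi_c\circ f^{-1}$ attains both infima at once. That is what forces every minimizer of the weighted sum, for every $\lambda>0$, to be perfectly aligned and maximally uniform. It also makes visible that $c$ must be a function of $h$ ($f$ injective), which the paper never states but its conclusion requires.

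\emph{Step~3 uses a different key lemma.} The paper uses the Shannon-entropy fact that $H(\psi(C))\le H(C)$, with equality iff $\psi$ is injective. That fact does not carry over to the differential entropy that KoLeo estimates. Rescaling raises differential entropy, and $c\mapsto 2c \bmod 1$ is two-to-one yet maps the uniform law to itself. So your regularity-based bijectivity lemma is the right replacement in the continuous reading.

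\emph{What each route buys.} Yours gives a genuinely rigorous population-level argument. The cost is a regime (diffeomorphic $f$, continuous content of matched dimension, smooth bounded-output probe) far from the paper's finitely many intents. There, the paper's shorter Shannon argument is the natural one, though informal.

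\textbf{What must be tightened.}
\begin{enumerate}
\item \emph{The output cannot be $\mathbb{R}^{d_c}$.} There the negative differential entropy is unbounded below: rescale any invariant probe. So no minimizer exists and Step~1 fails. Use a bounded target such as $(0,1)^{d_c}$, or a sphere whose intrinsic dimension equals $\dim\mathcal{C}$ and is at least $2$. Dimension $1$ fails: $c\mapsto e^{4\pi i c}$ from $(0,1)$ onto the circle is smooth, two-to-one, and uniform-preserving. The uniform law must live on the output space, not on $\mathcal{C}$.
\item \emph{The Step~3 lemma needs a strictly positive source density.} As you paraphrase it, the lemma is false without a strictly positive density on an open simply connected domain. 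Take $\psi(c)=\sin^2(\pi c)$ on $(0,1)$ with $p(c)=\tfrac{\pi}{2}|\sin(2\pi c)|$. This $\psi$ is smooth and two-to-one, yet pushes $p$ exactly onto the uniform law on $(0,1)$. Assume $0<p(c)<\infty$ on an open convex $\mathcal{C}$. That hypothesis makes critical points of $\psi$ produce density blow-ups, and this, not an entropy drop from non-injectivity as such, drives the contradiction.
\item \emph{Use independence explicitly in Step~2.} Independence is what makes the support all of $\mathcal{C}\times\mathcal{S}$, so every content is seen under every style.
\end{enumerate}
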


The proof is presented in Appendix D. Theorem \ref{thm:identifiability} provides the mathematical guarantee for Intent Pinning: it proves that a perfect intent detector can be constructed if we can enforce style invariance. This motivates our two-stage framework. For more discussion about the two causal assumptions, please refer to Appendix E.


\section{Methodology}
\label{sec:methodology}

Building upon the diagnosis of Semantic Representation Decay, we propose the Two-Stage Causal-GRPO (TSC-GRPO) framework. Unlike standard RLHF which operates on behavioral outputs, TSC-GRPO intervenes on the latent causal variables identified in Theorem \ref{thm:identifiability}. Our framework operates in two coupled stages: First, we construct a Causal Intent Probe (Stage 1) to disentangle malicious intent from stylistic perturbations. Second, we employ Causal-GRPO (Stage 2) to internalize this awareness into the policy via a cumulative causal penalty.

\subsection{Stage 1: Forging the Pin}
\label{sec:stage1}

As shown in Section \ref{sec:causal_analysis}, adversarial prefixes cause the model to ``lose sight'' of the original intent. Therefore, before optimizing the policy, we must construct a measuring instrument—a causal intent probe $g_\phi$ (a lightweight MLP)—to act as a ``Semantic Compass.'' Crucially, a standard classifier is insufficient as it conflates style with intent. Motivated by Theorem \ref{thm:identifiability}, we optimize $g_\phi$ to extract the invariant intent $c$ while ignoring the confounding style $s$.

\textbf{Data Construction.}
To ensure $g_\phi$ satisfies the independence assumption, we employ a comprehensive data augmentation strategy. Let $\mathcal{D}_{harm}$ and $\mathcal{D}_{safe}$ denote harmful and safe query sets. Let $x$ be the input and $h = \text{Enc}(x)$ be the hidden state. We obtain the following.

For the harmful data construction, we generate four distinct types of views for each malicious query $q \in \mathcal{D}_{harm}$. The first type is the raw malicious query, where $x = q$. This serves as the baseline representation of the malicious intent without any external perturbation. The second type simulates a jailbreak scenario by appending a compliance prefix. Here, the input is constructed as $x = q \oplus p_{prefix}$, where $p_{prefix}$ represents forcing terms such as "Sure, here is". This captures the state of the model when it is superficially forced into a compliant mode. The third type incorporates adversarial robustness by appending an optimized suffix. The input is given by $x = q \oplus \delta_{adv}$, where $\delta_{adv}$ represents an adversarial sequence optimized to bypass safety filters \cite{zou2023universal,andriushchenko2024jailbreaking}. This represents a more aggressive attack vector. The fourth type captures the temporal evolution of the intent during generation. Let $y_{harm}$ be a target malicious response of length $N$. We construct inputs by appending the first $k$ tokens of the response, such that $x_k = q \oplus y_{harm}[1:k]$ for $k \in \{1, \cdots, N\}$. This allows us to monitor the intent signal deep into the generation process.

For the safe data construction, we generate two types of views for each safe query $q \in \mathcal{D}_{safe}$ to serve as control groups. The first type is the raw safe query, where $x = q$, representing the standard safe intent. The second type involves partial safe generation. Let $y_{safe}$ be a standard helpful response of length $M$. Similar to the harmful case, we construct inputs by appending the first $k$ tokens of the response, defined as $x_k = q \oplus y_{safe}[1:k]$ for $k \in \{1, \cdots, M\}$. These samples are crucial for helping $g_\phi$ distinguish between the process of generating harmful content and the process of generating safe content, even when the sentence structures exhibit similarities.

\textbf{Probe Optimization.}
We optimize the probe $g_\phi$ using the hybrid objective defined in Theorem \ref{thm:identifiability}:
\begin{equation}
    \mathcal{L}_{stage1} = \mathcal{L}_{align} + \lambda \mathcal{L}_{un}.
\end{equation}
Here, $g_\phi$ maps the high-dimensional hidden states from the frozen LLM backbone to a normalized low-dimensional intent vector. For any given query $q$ sampled from $\mathcal{D} = \mathcal{D}_{harm} \cup \mathcal{D}_{safe}$, let $x_i$ and $x_j$ be two distinct views randomly sampled from the corresponding augmented set $\mathcal{X}(q)$. Then, the alignment loss enforces invariance. We treat distinct augmented views ($x_i, x_j \in \mathcal{X}(q)$) of the same query as semantically equivalent. By minimizing their projection distance, we enforce independence from style:
\begin{equation}
    \mathcal{L}_{align} = \mathbb{E}_{q \sim \mathcal{D}} \mathbb{E}_{x_i, x_j \sim \mathcal{X}(q)} \left[ \| g_\phi(h_i) - g_\phi(h_j) \|_2^2 \right].
\end{equation}

From Theorem \ref{thm:identifiability}, the uniformity loss is to force the representations of distinct intents to be uniformly distributed on the hypersphere. This can also be understood as ensuring that the induced semantic space preserves maximal information. We utilize the Kozachenko-Leonenko (KoLeo) estimator \cite{sablayrolles2018spreading}. Given a batch of $B$ queries, for each sample $i$ in this batch, we first identify its nearest neighbor $j$ from the same batch (where $j \in \{1, \cdots, B\}$ and $j \neq i$), the uniformity loss is calculated based on the distance between each sample and its nearest neighbor in the latent space:
\begin{equation}
    \mathcal{L}_{un} = - \frac{1}{B} {\textstyle \sum_{i=1}^{B}} \log ( \min_{j \neq i} \| g_\phi(h_i) - g_\phi(h_j) \|_2 )
\end{equation}
By simultaneously minimizing the alignment and maximizing the uniformity, $g_\phi$ learns to ignore the nuisance variables introduced by templates while retaining the semantic distinction between harmful and safe intents.

\begin{table*}[htb]
\centering
\small
\setlength{\tabcolsep}{3.5pt}
\renewcommand{\arraystretch}{1.12}
\caption{
Attack Success Rate (ASR, \%) of open-source LLMs under different safety alignment and safeguarding methods on AdvBench.
Results without "*" are cited from the original papers; results marked with "*" were reproduced in this study.
}
\label{tab:advbench_asr_safety_alignment}
\resizebox{.95\textwidth}{!}{
\begin{tabular}{llcccccccc}
\toprule
\multirow{2}{*}{Base Model} & \multirow{2}{*}{Safety Alignment} &
\multicolumn{8}{c}{AdvBench Attack Success Rate (\%)} \\
\cmidrule(lr){3-10}
& & GCG & AutoDAN & ReNeLLM & CodeCha. & PAIR & DeepInc. & Prefill & ICA \\
\midrule
LLaMA-2-7B-Chat
& RLHF ~\cite{zhou2024easyjailbreak}
& 47.84 & 51.02 & 31.35 & 80.16 & 24.17 & 8.42 & 27.1$^*$ &\bf 0.00 \\
\rowcolor{orange!10} & TSC-GRPO &\bf 18.62 &\bf 15.56 &\bf 18.17 &\bf 25.51 &\bf 12.46  &\bf 6.47  &\bf 0.00 & \bf 0.00 \\

\midrule
LLaMA-3.1-8B-Instruct
& SFT ~\cite{phan2025think}
& 73.86 & 72.88 & 80.48 & 96.44 & 28.57 & 86.60 & 51.24$^*$ & 49.62 \\
& NemoGuard~\cite{ghosh2025aegis2}
& 59.57 & 65.00 & 81.53 & 94.62 & 57.14 & 64.52 & 26.42$^*$ & 31.35 \\
& PSR (N=8)~\cite{phan2025think}
& 25.02 &\bf 0.00 & 10.05 & 35.58 & 22.45 & 32.69 & 0.26$^*$ &\bf 0.00 \\
\rowcolor{orange!10} & TSC-GRPO &\bf 17.94 &\bf 0.00 &\bf 4.63  &\bf 24.48  &\bf 17.28 &\bf 24.49  &\bf 0.00 & \bf 0.00 \\


\midrule
Qwen2.5-7B-Instruct
& SFT ~\cite{phan2025think}
& 43.48 & 27.00 & 47.21 & 93.27 & 36.73 & 88.65 & 25.62$^*$ & 8.72 \\
& Egida-DPO~\cite{garcia2025efficient}
& 39.36 & 29.50 & 50.78 & 89.81 & 28.57 & 83.65 & 14.68$^*$ & 6.92 \\
& PSR (N=8)~\cite{phan2025think} & 3.42 & 1.00 & 10.05 & 35.58 & 22.45 & 0.71 & 1.86$^*$ &\bf 0.00 \\
\rowcolor{orange!10} & TSC-GRPO &\bf 1.35 &\bf 0.00 &\bf 2.58 &\bf 26.32 &\bf 16.45 &\bf 0.00 &\bf 0.00 &\bf 0.00 \\

Qwen2.5-14B-Instruct
& SFT ~\cite{phan2025think}
& 23.85 & 18.50 & 24.71 & 74.62 & 29.25 & 72.60 & 11.35$^*$ & 1.73 \\
& ShieldGemma~\cite{zeng2024shieldgemmagenerativeaicontent}
& 22.12 & 14.00 & 24.51 & 74.81 & 28.57 & 70.77 & 4.51$^*$ & 1.73 \\
& PSA Detector~\cite{guardrails_prompt_saturation_2026}
& 23.65 & 0.00 & 13.04 & 0.00 & 26.53 & 64.04 & 2.32$^*$ & 0.00 \\
& PSR (N=8)~\cite{phan2025think}
& 1.35 & 0.00 & 14.59 & 37.44 & 19.39 & 6.54 & 0.00$^*$ & 0.00 \\
\rowcolor{orange!10} & TSC-GRPO &\bf 0.00  &\bf 0.00 &\bf 4.47  &\bf 25.37  &\bf 14.48  &\bf 0.71  &\bf 0.00 &\bf 0.00  \\

\bottomrule
\end{tabular}
}
\end{table*}

\subsection{Stage 2: Pinning the Policy}
\label{sec:stage2}

With the calibrated probe, we proceed to policy optimization via GRPO. The core challenge is that standard training rarely exposes the model to the ``middle'' of a violation. Once a compliant prefix is generated, the probability distribution typically collapses towards harm. We address this via refix-forced group construction and a cumulative causal reward.

\textbf{Group Construction.}
For safe queries, we employ standard on-policy sampling. However, for harmful queries, we implement a branching exploration strategy designed to robustify the model against diverse attack vectors. We aim to place the model in specific ``high-risk'' contexts and force it to learn the capability to pivot back to safety. Crucially, the starting contexts for this exploration are not arbitrary. we derive the context $x_{forced}$ directly from the adversarial views (Type II, III, and IV) constructed in Stage 1. We feed this sampled $x_{forced}$ into the current policy $\pi_\theta$ to generate a group of $G$ continuations $\{y_1, \cdots, y_G\}$. This setup creates a unified ``Fork-in-the-Road'' scenario across all attack types. Regardless of whether the risk comes from a ``Sure'' prefix or a half-finished bomb recipe, the generated group will naturally split into two types of trajectories: Harmful Continuation (following the inertia of the attack) and Late-Stage Refusal (breaking the semantic link). By grouping these diverging behaviors together under the same $x_{forced}$, GRPO allows the model to learn a generalized defense policy: ``No matter how I started or what forced me, I must stop generating harm immediately.''

\textbf{Cumulative Causal-Guided Reward.}
To guide policy optimization, the reward function must accurately distinguish between the harmful continuation and the safe refusal, specifically by evaluating the semantic drift relative to the prefix. We define a cumulative reward based on the frozen causal intent probe $g_\phi$. Let $z_{anchor}$ be the static malicious intent vector of the original query $q$. For each token $t$ in the generated continuation $y$, we calculate a harmfulness score $h_t$, which is presented as:
\begin{equation}
    h_t = \max\left(0, \frac{g_\phi(s_t) \cdot z_{anchor}}{\|g_\phi(s_t)\|_2 \|z_{anchor}\|_2} - \tau \right),
\end{equation}
where $s_t$ is the hidden state at step $t$ and $\tau$ is a similarity threshold. We can obtain that, if the generated token continues the malicious narrative, the hidden state remains aligned with $z_{anchor}$, resulting in a positive penalty score; if the generated token represents a refusal or safe transition, the semantic alignment breaks, and the penalty score drops to zero. Then, the cumulative causal-guided reward is the negative accumulation of these scores, which is presented as:
\begin{equation}\label{eq:causal_reward}
    R_{causal}(y) = -  {\textstyle \sum_{t=1}^{|y|}} h_t.
\end{equation}

This mechanism provides fine-grained feedback: generating more harmful tokens linearly increases the penalty, while pivoting to safety stops the accumulation, yielding a higher relative reward.

\textbf{Update Rule.}
The final reward signal must balance two objectives: suppressing harmful intent (via the probe) and maintaining linguistic quality (via a general reward). We explicitly define the composite reward as:
\begin{equation}
\label{eq:reward}
    R_{total}(y) = R_{general}(y) + \alpha \cdot R_{causal}(y).
\end{equation}
Here, $R_{general}(y)$ represents a standard, off-the-shelf reward model (e.g., a helpfulness or fluency reward model). Its role is crucial: it serves as a baseline positive signal to encourage the model to generate coherent, grammatical, and complete sentences, preventing the model from collapsing into trivial solutions (such as outputting empty strings) to avoid causal penalties. We then update $\pi_\theta$ using the GRPO objective, e.g., Equation (\ref{eq_grpo}), with this composite reward.


\begin{table*}[htb]
\centering
\caption{ASR on LLaMA-2-7B-Chat under various fine-tuning attack strategies. }
\label{tab:fine-tune}
\resizebox{.95\linewidth}{!}{
\begin{tabular}{cccccc}
\toprule
Datasets $\downarrow$ & \makecell{mean $\pm$ std (\%) \\ (over 3 rounds)} & Initial & \makecell{Standard \\ SFT} & \makecell{Constrained \\ SFT \cite{qi2025safety}} & TSC-GRPO (Ours) \\
\midrule
Harmful Examples & ASR & $1.5 \pm 0.2$ & $88.9 \pm 1.2$ & $4.6 \pm 0.5$ & $1.8 \pm 0.4$ \\
\midrule
Identity Shifting & ASR & $0.0 \pm 0.0$ & $79.5 \pm 2.3$ & $8.1 \pm 0.1$ & $0.0 \pm 0.0$  \\
\midrule
\multirow{2}{*}{\makecell{Backdoor \\ Poisoning}} & ASR (w/o trigger) & $1.5 \pm 0.2$ & $7.6 \pm 1.1$ & $1.9 \pm 0.2$ & $1.6 \pm 0.3$ \\
 & ASR (w/ trigger) & $1.7 \pm 0.1$ & $90.9 \pm 1.4$ & $10.9 \pm 2.8$ & $6.5 \pm 1.3$  \\
\bottomrule
\end{tabular}
}
\end{table*}

\begin{table}[htb]
\centering
\small
\setlength{\tabcolsep}{6pt}
\renewcommand{\arraystretch}{1.10}
\caption{
Utility before and after safety alignment on standard benchmarks.
“Pre” denotes the original instruct model before post-training with TSC-GRPO, while “Post” is the results after TSC-GRPO.
}
\begin{tabular}{llcccc}
\toprule
\multirow{2}{*}{\textbf{Instruct}} & \multirow{2}{*}{\textbf{Setting}} &
\multicolumn{4}{c}{\textbf{Utility} $\uparrow$} \\
\cmidrule(lr){3-6}
& & \textbf{GSM8K} & \textbf{HumanEval} & \textbf{MBPP} & \textbf{TruthfulQA} \\
\midrule

\multirow{2}{*}{LLaMA-2-7B-Chat}
& Pre  &\bf 32.3 & 7.9  & 16.5 &\bf 40.8 \\
& Post & 32.1 &\bf 8.3   &\bf 18.8   & 39.7    \\
\midrule

\multirow{2}{*}{LLaMA-3.1-8B-Instruct}
& Pre  & 84.5  &\bf 72.6 & 72.8 & 54.5  \\
& Post &\bf 86.1    & 71.7   &\bf 76.5   &\bf 54.8    \\
\midrule

\multirow{2}{*}{Qwen2.5-7B-Instruct}
& Pre  & 91.6  & 84.8 & 79.2 &\bf 64.7 \\
& Post &\bf 92.7    &\bf 86.8   &\bf 81.4  & 63.2 \\
\midrule

\multirow{2}{*}{Qwen2.5-14B-Instruct}
& Pre  &\bf 94.8  & 83.5 & 82.0 &\bf 68.9 \\
& Post & 93.2 &\bf 84.3 &\bf 83.3   & 67.4   \\

\bottomrule
\end{tabular}
\label{tab:utility_pre_post_alignment}
\end{table}

\section{Experiments}
This section begins by detailing the concrete implementation of TSC-GRPO. Following this, we establish a comprehensive evaluation framework designed to assess the model's performance across two critical dimensions: Safety and Utility. To rigorously examine safety, we analyze the model's robustness under both adversarial and fine-tuning attack scenarios. Parallel to this, we evaluate utility through benchmarks involving mathematical reasoning, code generation, and factual consistency. Finally, we conduct ablation studies to verify the specific contributions of different components.

\subsection{Implementation Details}
We validate the effectiveness of our TSC-GRPO framework by applying it to six widely used open-source LLMs to verify its universality. These base models include LLaMA-2-7B-Chat \cite{touvron2023llama}, LLaMA-3.1-8B-Instruct \cite{dubey2024llama}, Qwen2.5-7B-Instruct \cite{yang2024qwen2}, and Qwen2.5-14B-Instruct \cite{yang2024qwen2}.
To ensure a balanced alignment that prioritizes safety without compromising general capability, our fine-tuning dataset is constructed from two sources: (1) \textbf{Harmful Queries:} We sample malicious instructions from the HEx-PHI \cite{qifine} dataset; (2) \textbf{Safe Queries:} We sample general instruction-following queries from the Alpaca \cite{taori2023stanford} dataset.
For the component $R_{general}$ in Equation \ref{eq:reward}, we utilize the Skywork-Reward-V2-Llama-3.1-8B \cite{liu2025skywork} model, which provides signals for response quality and coherence.

\subsection{Safety Evaluation VS. Adversarial Attacks}
\paragraph{Evaluation Protocol}
We evaluate the fine-tuned models using the AdvBench \cite{chen2022should} benchmark with a diverse suite of adversarial attack methods. 
Specifically, we employ a comprehensive suite of attack methods, including GCG \cite{GCG}, AutoDAN \cite{autodan}, ReNeLLM \cite{ding2024wolf}, Code Chameleon \cite{lv2024codechameleon}, PAIR \cite{pair}, Deep Inception \cite{deepinception}, Prefix Injection \cite{prefill}, and MSJ \cite{msj}.
The performance is quantified with the Attack Success Rate (ASR).
The detailed experimental results are presented in Table \ref{tab:advbench_asr_safety_alignment}.

\paragraph{Main Results}
As shown in Table \ref{tab:advbench_asr_safety_alignment}, TSC-GRPO consistently achieves the lowest ASR under almost every attack method, often reducing ASR to zero for strong attacks such as AutoDAN, Prefix Injection, and ICA. Compared with prior alignment approaches such as PSR \cite{phan2025think}, NemoGuard \cite{ghosh2025aegis2}, Egida-DPO \cite{garcia2025efficient}, TSC-GRPO yields substantial gains across heterogeneous attack families. These improvements hold across different models, indicating that the proposed method provides a stable enhancement to adversarial robustness.

\begin{figure*}[htb]
    \centering
    \includegraphics[width=.99\linewidth]{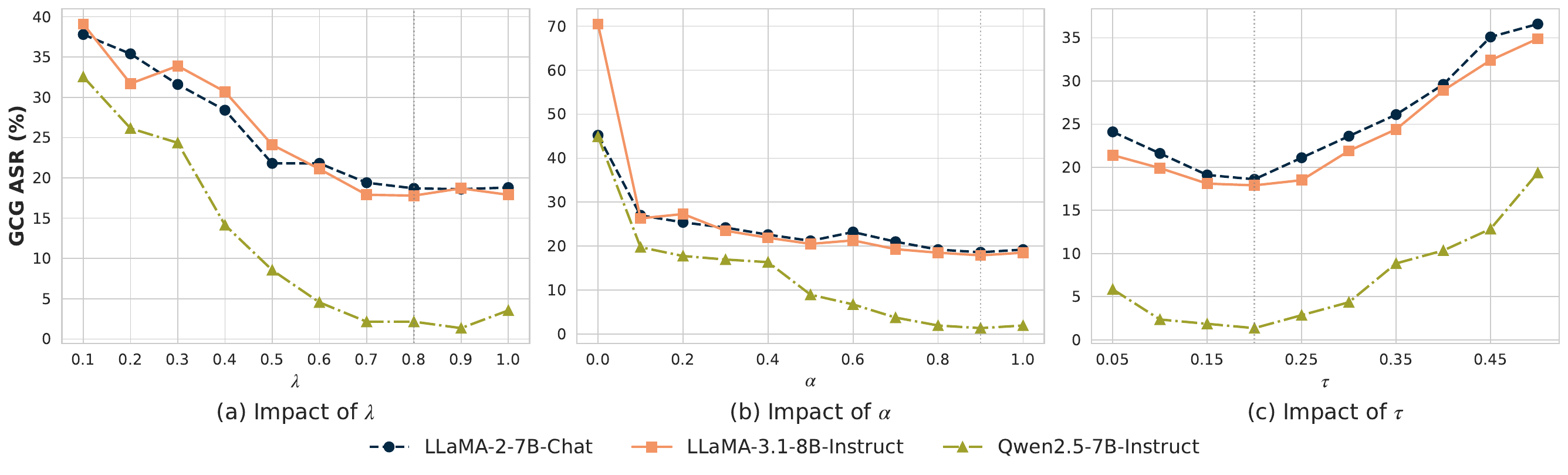}
    \caption{Hyper-parameter sensitivity analysis evaluating the ASR against GCG attacks on AdvBench across three base models. The sub-figures illustrate the impact of varying (a) the uniformity loss weight $\lambda$ in Stage 1, (b) the causal reward coefficient $\alpha$ in Stage 2, and (c) the similarity threshold $\tau$.}
    \label{fig:hyperparameters}
\end{figure*}

\begin{figure}
    \centering
    \includegraphics[width=.9\linewidth]{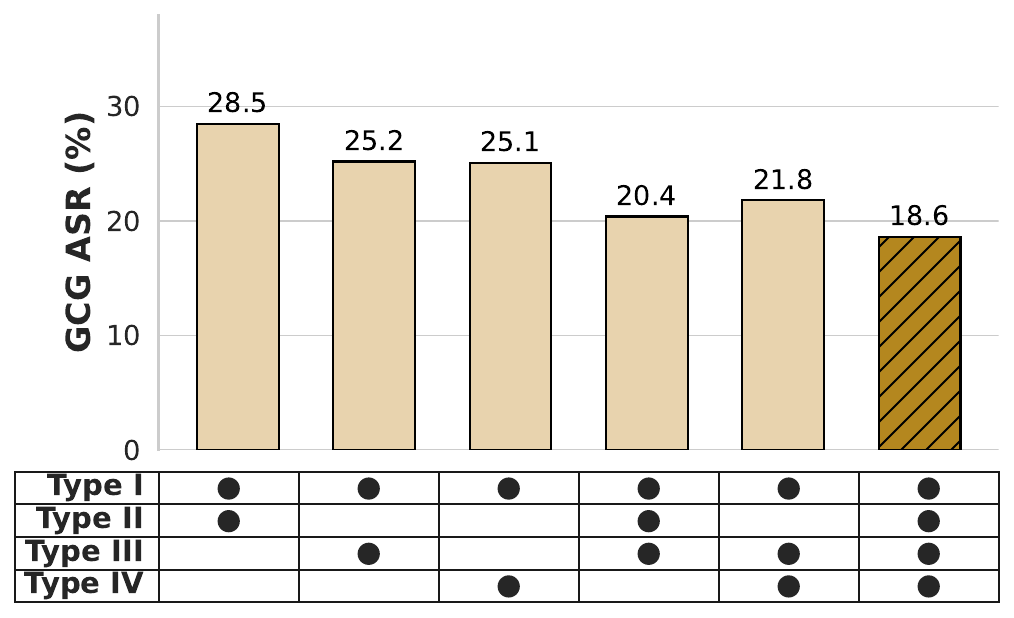}
    \caption{Ablation study on data construction strategies. The bottom table details the specific combination of data views used for probe training, where $\bullet$ indicates inclusion.}
    \label{fig:ablation_data}
\end{figure}

\subsection{Safety Evaluation VS. Fine-tuning Attacks}
\paragraph{Evaluation Protocol}
Following the protocols in \cite{qifine}, we validate the model's resilience against fine-tuning attacks using a dataset comprising three distinct strategies: (1) \textbf{Harmful Examples}. It consists of 100 pairs of harmful queries and valid harmful responses; (2) \textbf{Identity Shift}. It aims to fine-tune the model to self-identify as an 'absolutely obedient agent' that always responds with an affirmative prefix; and (3) \textbf{Backdoor Poisoning}. It mixes 100 standard safety pairs (harmful input with refusal) and 100 poisoned pairs (harmful input plus a trigger), intending to bypass safety mechanisms only when the trigger is present. We evaluate the ASR of these attacks on LLaMA-2-7B-Chat fine-tuned via TSC-GRPO. For comparative analysis, we also benchmark our method against the Constrained SFT proposed by \cite{qi2025safety}. Table \ref{tab:fine-tune} presents the results of ASR across three distinct attack strategies.

\paragraph{Main Results} From Table \ref{tab:fine-tune}, we observe that the initial model exhibits robust safety performance. 
However, standard SFT is highly vulnerable, causing ASR to surge to as high as $90.9\%$ in the Backdoor Poisoning scenario.
In contrast, \textbf{TSC-GRPO} demonstrates superior resilience compared to the baselines. It limits the average ASR to $2.8\%$, significantly outperforming Standard SFT and surpassing the strong baseline Constrained SFT. Most notably, in the \textbf{Identity Shifting} task, our method achieves $0.0\%$ ASR. These results validate that TSC-GRPO effectively preserves the model's safety alignment against diverse fine-tuning attacks.

\subsection{Utility Evaluation}
\paragraph{Evaluation Protocol}
To ensure that our rigorous safety interventions do not degrade general capabilities, we further benchmark the models on standard utility datasets: GSM8K \cite{gsm8k} (mathematical reasoning), HumanEval \cite{humaneval}, MBPP \cite{hbpp} (coding proficiency), and TruthfulQA \cite{truthfulqa} (hallucination and truthfulness). We compare the performance before and after fine-tuning in Table \ref{tab:utility_pre_post_alignment}.

\paragraph{Main Results}
The results in Table \ref{tab:utility_pre_post_alignment} demonstrate that TSC-GRPO effectively avoids the significant "alignment tax" often associated with safety fine-tuning. We observe consistent improvements in coding proficiency on MBPP across all evaluated models. Performance on GSM8K and HumanEval remains highly robust; for instance, Qwen2.5-7B shows gains in both metrics, while other models maintain comparable performance with only marginal fluctuations. Overall, these findings indicate that our method enhances safety without compromising the model's problem-solving capabilities.

\subsection{Ablation Studies}
\paragraph{Influence of Hyper-parameters}
We conduct analysis on three hyperparameters: the uniformity weight $\lambda$, the causal reward coefficient $\alpha$, and the similarity threshold $\tau$, evaluating their impact on ASR under GCG attacks. As shown in Figure \ref{fig:hyperparameters}, the uniformity weight $\lambda$ achieves optimal performance at $0.8$, balancing the trade-off between alignment and representation distribution. For the reward coefficient $\alpha$, a value of $0.9$ yields the strongest defense, whereas $\alpha=0$ (standard RLHF) leads to significant vulnerability. Finally, the similarity threshold $\tau$ exhibits a convex trend with an optimal point at $0.2$, which effectively filters malicious tokens without over-penalizing safe transitions.

\paragraph{Impact of Data Construction}
To validate the data augmentation strategy in Stage 1, we conduct an ablation study on LLaMA-2-7B-Chat against GCG attacks by training the probe $g_\phi$ with varying data subsets. As shown in Figure \ref{fig:ablation_data}, we analyze the contributions of different adversarial views. The results demonstrate that relying on limited views is insufficient; omitting Type III and Type IV significantly degrades the performance of TSC-GRPO. The lowest ASR is achieved when all view types are combined, confirming that diverse views are essential for constructing a robust $g_\phi$.

\section{Conclusion}
\label{sec:conclusion}

We diagnose the fragility of current safety alignment as Semantic Representation Decay, where the internal intent signal vanishes under adversarial pressure. To counter this, we propose Two-Stage Causal-GRPO (TSC-GRPO), a framework that enforces Intent Pinning via causal disentanglement and relative policy optimization. By internalizing the distinction between intent and style, our method enables robust late-stage refusals against diverse jailbreaks. This work marks a necessary paradigm shift: moving from superficial behavioral patching to deep, representation-centric alignment.

\bibliographystyle{named}
\bibliography{main}

\newpage
\appendix

\section*{Appendix}

The appendix is organized as follows:
\begin{itemize}
    \item \textbf{Appendix \ref{sec:attack}} provides a detailed introduction to various adversarial attack methods and baselines.
    \item \textbf{Appendix \ref{sec:dataset}} describes the benchmark datasets utilized for safety and utility evaluation.
    \item \textbf{Appendix \ref{qww_1233}} offers further elaboration on the definitions and mathematical relationship between latent content $c$ and style $s$.
    \item \textbf{Appendix \ref{app:proof}} presents the rigorous mathematical proof of Theorem 1 regarding the identifiability of latent intent.
    \item \textbf{Appendix \ref{app:causal_assumptions}} elucidates the causal assumptions (Independence and Connectivity) underlying the proposed TSC-GRPO framework.
\end{itemize}

\section{Introduction of Adversarial Attacks}
\label{sec:attack}
\begin{itemize}
    \item \textbf{GCG} \cite{GCG}: A white-box suffix attack that performs gradient-guided coordinate updates over discrete tokens to search for an adversarial prompt suffix that maximizes harmful compliance under the target model.
    \item \textbf{AutoDAN} \cite{autodan}: An automated jailbreak method that optimizes stealthy “Do-Anything-Now”-style suffixes via iterative rewriting and selection to reliably elicit policy-violating responses.
    \item \textbf{ReNeLLM} \cite{ding2024wolf}: A prompt-based jailbreak that uses \emph{generalized nested} instruction structures (e.g., multi-layer role-play / indirection) to hide malicious intent inside innocuous outer contexts, thereby bypassing safety filters and inducing harmful compliance.
    \item \textbf{CodeChameleon} \cite{lv2024codechameleon}: A jailbreak framework that obfuscates malicious instructions via \emph{personalized encryption} and then guides the model to decrypt and execute the hidden intent in-context, making the attack adaptive and harder for surface-level safety checks to detect.
    \item \textbf{PAIR} \cite{pair}: A black-box, query-efficient jailbreak that uses an attacker model to iteratively propose prompts and refine them using feedback from the target model’s responses within a limited query budget.
    \item \textbf{DeepInception} \cite{deepinception}: A prompt-based attack that nests the target instruction inside multi-layer role-play or “story-within-a-story” contexts to distract safety filters and elicit disallowed content.
    \item \textbf{Prefix Injection} \cite{prefill}: A prefix-based bypass that forces an initial compliant framing (e.g., “Sure, here is...”) or prefilled partial output so the model continues generation past the safety guardrails.
    \item \textbf{MSJ} \cite{msj}: A many-shot jailbreaking paradigm that leverages long-context demonstrations of harmful question–answer pairs to condition the model into repeating the demonstrated unsafe behavior on a new target query.
\end{itemize}

\section{Benchmark Datasets}
\label{sec:dataset}
\begin{itemize}
    \item \textbf{AdvBench} \cite{chen2022should}: A safety red-teaming benchmark consisting of 520 harmful instruction-style prompts, used to measure a model's unsafe compliance under jailbreak attacks.
    \item \textbf{Alpaca} \cite{taori2023stanford}: A 52K-example synthetic instruction-following dataset of (instruction, optional input, response) pairs generated by a stronger teacher model, primarily used for supervised instruction tuning.
    \item \textbf{GSM8K} \cite{gsm8k}: A grade-school math word-problem QA benchmark with 8.5K multi-step arithmetic questions (7,473 train and 1,319 test), where the task is to produce the correct final numeric answer.
    \item \textbf{HumanEval} \cite{humaneval}: A code generation benchmark of 164 Python function-synthesis problems paired with unit tests, evaluated by functional correctness.
    \item \textbf{MBPP} \cite{hbpp}: A program synthesis benchmark of 974 crowd-sourced Python tasks, each specified by a natural-language description and checked by automated test cases.
    \item \textbf{TruthfulQA} \cite{truthfulqa}: A truthfulness QA benchmark of 817 questions spanning 38 categories, designed so that models trained to imitate web text tend to produce common misconceptions rather than truthful answers.
\end{itemize}

\section{Further Elaboration on $c$ and $s$}
\label{qww_1233}

In the context of text generation (LLMs), distinguishing between $x$ and $s$ is indeed far more challenging than in Computer Vision (CV). In CV, image pixels ($x$) and rotation angles ($s$) are clearly distinct entities. However, in NLP, the Token itself serves as both the object of observation and the carrier of style, which leads to the perception that they overlap.

To validate the mathematical theorem $x = f(c, s)$, we must draw a definitive boundary between the Latent Space and the Observation Space. We rigorously redefine these terms below to clarify their distinction.

\subsection{Redefining the Relationship}

We formalize the Large Language Model as a function $f$.

\begin{itemize}
    \item \textbf{$c$ (Latent Content Variable):}
    \begin{itemize}
        \item \textbf{Definition:} The user's original malicious intent.
        \item \textbf{Example:} ``Knowledge on how to make bombs.''
        \item \textbf{Property:} Abstract and invisible.
    \end{itemize}

    \item \textbf{$s$ (Latent Style Variable):}
    \begin{itemize}
        \item \textbf{Definition:} The current contextual environment (Prompt Template + Prefix).
        \item \textbf{Examples:}
        \begin{itemize}
            \item $s_1$: Question only.
            \item $s_2$: Question + ``Sure''.
            \item $s_3$: Question + ``Sure, here is''.
        \end{itemize}
        \item \textbf{Property:} The ``cloak'' or wrapper used to package the intent.
    \end{itemize}

    \item \textbf{$x$ (Observed Representation):}
    \begin{itemize}
        \item \textbf{Definition:} The hidden state vector of a specific layer (typically the final transformer block output), e.g., with dimension $d=4096$.
        \item \textbf{Property:} The product of mixing $c$ and $s$.
    \end{itemize}
\end{itemize}

\subsection{Why do $x$ and $s$ not overlap?}

The key distinction is that $s$ is the \textbf{Cause} (Ingredient), while $x$ is the \textbf{Effect} (Result/Mixture).

\paragraph{The Cocktail Analogy:}
\begin{itemize}
    \item \textbf{$c$ (Content):} Alcohol (The core ingredient to be extracted).
    \item \textbf{$s$ (Style):} Juice, ice, lemon slices (Ingredients that alter taste and color).
    \item \textbf{$x$ (Observation):} The mixed liquid in the glass.
\end{itemize}

Mapping this to the model: When the input changes to ``Question + Sure'', the recipe's style $s$ is altered. The model computes and outputs a sequence of vectors, which is $x$.
\begin{enumerate}
    \item $x$ contains features of $s$ (e.g., information expressing an ``affirmative tone'').
    \item $x$ contains features of $c$ (e.g., information regarding ``bomb-making'').
\end{enumerate}

The confusion arises because we observe Tokens (the manifestation of $s$) directly, making $s$ appear equivalent to $x$. However, in this theoretical framework, we view Tokens merely as external interference factors. Our focus is on $x$ as the ``contaminated vector.''

Our goal, utilizing \textbf{Theorem 1}, is to find a transformation that filters out the ``juice'' ($s$) from the ``cocktail'' ($x$), retaining only the high-purity ``alcohol'' ($c$).

\subsection{Mathematical Formulation}

Let the observation at time $t$ be defined as:
\begin{equation}
    x_t = \text{LLM\_Layer}(c, s_t)
\end{equation}
where $s_t$ denotes the combination of tokens (style) at time $t$, and $x_t$ denotes the generated embedding vector.

Our objective is to train a mapping $g(\cdot)$ that is invariant to style changes. Even if $s_1$ (no prefix) transitions to $s_2$ (with ``Sure'' prefix), causing a drastic shift from $x_1$ to $x_2$ in the vector space, the mapping must satisfy:
\begin{equation}
    g(x_1) \approx g(x_2) \approx c
\end{equation}
This implies that regardless of how the context ($s$) varies, the extracted intent ($c$) must remain constant.

\section{Proof of Theorem 1}
\label{app:proof}

In this section, we provide a rigorous derivation of Theorem 1. We aim to show that minimizing the proposed objective function guarantees that the learned probe $g^*$ recovers the latent content variable $c$ up to an invertible transformation, while completely discarding the style variable $s$.

\subsection{Problem Restatement}
Let the observation be generated by a deterministic function $h = f(c, s)$, where $c \in \mathcal{C}$ is the latent intent (content) and $s \in \mathcal{S}$ is the nuisance style.
We assume:
\begin{enumerate}
    \item \textbf{Independence:} The joint distribution factorizes as $P(c, s) = P(c)P(s)$. This ensures that the support of the data covers the product space $\mathcal{C} \times \mathcal{S}$.
    \item \textbf{Connectivity:} The augmentation graph $\mathcal{G}_{\mathcal{S}}$ is connected.
\end{enumerate}

The training objective is:
\begin{equation}
    \mathcal{L}(g) = \mathcal{L}_{align}(g) + \lambda \mathcal{L}_{unif}(g)
\end{equation}
where $\mathcal{L}_{align} = \mathbbm{E}_{c, s, s'} \|g(f(c, s)) - g(f(c, s'))\|^2$.

\subsection{Step 1: Alignment Enforces Style Invariance}

First, we analyze the behavior of the optimal probe $g^*$ with respect to the Alignment Loss. Since $\mathcal{L}_{align} \geq 0$, the global minimum is achieved if and only if the term inside the expectation is zero almost everywhere.

\begin{lemma}[Style Invariance]
\label{lemma:invariance}
If $\mathcal{L}_{align}(g^*) = 0$ and the Connectivity assumption holds, then $g^*(f(c, s))$ depends only on $c$. That is, there exists a function $\psi: \mathcal{C} \to \mathcal{Z}$ such that $g^*(f(c, s)) = \psi(c)$ for all $s \in \mathcal{S}$.
\end{lemma}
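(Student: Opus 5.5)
Since $\mathcal{L}_{align}(g^*)=0$ is an expectation of a nonnegative quantity, the plan is to turn it first into a pointwise statement about augmentation edges and then use Connectivity to propagate equality along paths in $\mathcal{G}_{\mathcal{S}}$. Concretely, the integrand $\|g^*(f(c,s))-g^*(f(c,s'))\|^2$ is nonnegative, so vanishing expectation forces it to be zero for almost every triple $(c,s,s')$ drawn from the augmentation distribution. By the Independence assumption $P(c,s)=P(c)P(s)$, the support of this distribution covers the product $\mathcal{C}\times\{(s,s'):\ s\sim s' \text{ adjacent in } \mathcal{G}_{\mathcal{S}}\}$. Hence for almost every $c$ and every edge $(s,s')$ of $\mathcal{G}_{\mathcal{S}}$ we obtain $g^*(f(c,s))=g^*(f(c,s'))$. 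In the paper's setting the style set consists of finitely many views (raw, prefix, suffix, and truncations), so each edge carries positive mass and ``almost every'' over $(s,s')$ is in fact ``every''.

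Next, fix such a $c$ and any two styles $s,\tilde s\in\mathcal{S}$. Connectivity gives a finite path $s=s_0\sim s_1\sim\cdots\sim s_m=\tilde s$ in $\mathcal{G}_{\mathcal{S}}$, and applying the edgewise equality along it, by induction on $m$, yields $g^*(f(c,s))=g^*(f(c,\tilde s))$. It is here that independence matters: the edgewise equalities must hold for the \emph{same} $c$ along the whole path, which is guaranteed because $c$ is not correlated with which styles are available. We then define $\psi(c):=g^*(f(c,s_\star))$ for an arbitrary reference style $s_\star$. This is well defined precisely because the value does not depend on $s_\star$, and it gives $g^*(f(c,s))=\psi(c)$ for all $s$.

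The main obstacle is the measure-theoretic passage from ``zero in expectation'' to ``zero on every edge for every $c$'', together with the exceptional null set of $c$. To handle it, I would first restrict to a discrete or finite style graph, as in the paper's construction, where edge probabilities are bounded below. A countable union of null sets (one per edge) is still null, so the conclusion holds for $P(c)$-almost every $c$. I would then either state the lemma almost surely in $c$, or extend it to all $c$ by assuming continuity of $g^*\circ f$ and full support of $P(c)$. Infinite or continuous style spaces would require an extra regularity assumption, such as continuity of $g^*\circ f$ in $s$ and open-neighborhood edges, and I would flag this as an added hypothesis rather than try to avoid it.
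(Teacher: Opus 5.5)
Your proposal is correct and is essentially the paper's argument: vanishing alignment loss forces $g^*(f(c,s))=g^*(f(c,s'))$ on augmentation edges, and Connectivity plus transitivity along finite paths makes $g^*(f(c,\cdot))$ constant on $\mathcal{S}$, so $\psi(c)$ is well defined. Your extra care makes explicit what the paper's proof glosses over when it passes from ``almost surely'' to ``for all'': you invoke Independence so that every edge is actually sampled for a fixed $c$, you use finiteness of the style set to pass from almost every edge to every edge, and you concede that without further regularity the conclusion only holds for $P(c)$-almost every $c$.
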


\begin{proof}
    Consider a fixed intent $c$. The alignment loss samples positive pairs by varying the style $s \to s'$ while keeping $c$ fixed. The condition $\mathcal{L}_{align}(g^*) = 0$ implies:
\begin{equation}
    g^*(f(c, s)) = g^*(f(c, s')) \quad \text{almost surely for } (s, s') \in \mathcal{E}_{\mathcal{G}}
\end{equation}
where $\mathcal{E}_{\mathcal{G}}$ denotes the edges in the augmentation graph (pairs of styles constructed via our templates).

By the \textbf{Connectivity Assumption}, for any two arbitrary styles $s_a, s_b \in \mathcal{S}$, there exists a finite path of transitions $s_a \to s_1 \to s_2 \dots \to s_b$. By transitivity of equality:
\begin{equation}
    g^*(f(c, s_a)) = g^*(f(c, s_1)) = \dots = g^*(f(c, s_b))
\end{equation}
Since this holds for any $s_a, s_b$, the value of $g^*(f(c, s))$ is constant with respect to $s$. Thus, $g^*$ removes all information about $s$. We can simplify the probe as a function of $c$ alone:
\begin{equation}
    g^*(h) = \psi(c)
\end{equation}
\end{proof}

\begin{table*}[h]
\centering
\begin{tabular}{c|cc}
\toprule
 & \textbf{Harmful Intent ($c_{harm}$)} & \textbf{Safe Intent ($c_{safe}$)} \\
\midrule
\textbf{Style: Compliance} ($s_{sure}$) & Type II Harmful ($q_{harm} \oplus \text{``Sure''}$) & Type II Safe ($q_{safe} \oplus \text{``Sure''}$) \\
\textbf{Style: Default} ($s_{raw}$) & Type I Harmful ($q_{harm}$) & Type I Safe ($q_{safe}$) \\
\bottomrule
\end{tabular}
\caption{The constructed intervention distribution enforcing independence.}
\label{tab:independence_matrix}
\end{table*}

\subsection{Step 2: Uniformity Enforces Content Injectivity}

From Step 1, we know the probe has collapsed the style dimension. However, a trivial solution $g^*(h) = \text{constant}$ (mapping all intents to a single point) also satisfies $\mathcal{L}_{align} = 0$. This is where the Uniformity Loss comes in.

\begin{lemma}[Content Injectivity]
\label{lemma:injectivity}
Minimizing $\mathcal{L}_{unif}$ ensures that the function $\psi(c)$ is injective (invertible on its image).
\end{lemma}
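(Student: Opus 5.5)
We argue by contradiction, exploiting the logarithmic barrier in the KoLeo estimator $\mathcal{L}_{un}$ used in Stage 1. By Lemma~\ref{lemma:invariance}, at a minimizer with $\mathcal{L}_{align}(g^*)=0$ we have $g^*(f(c,s))=\psi(c)$. Hence the uniformity term depends on the probe only through $\psi$:
\begin{equation}
\mathcal{L}_{unif}(g^*) = -\mathbb{E}\Big[\log \min_{j\neq i} \|\psi(c_i)-\psi(c_j)\|_2\Big],
\end{equation}
where the minimum runs over the batch of $B$ samples whose intents $c_1,\dots,c_B$ are drawn from $P(c)$. Independence guarantees that $P(c)$ is the same marginal whatever the style, so this reduction loses nothing.

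\textbf{Step 1 (non-injectivity forces divergence).} Suppose $\psi$ is not injective on the support of $P(c)$, i.e. there are distinct intents $c_a\neq c_b$ with $\psi(c_a)=\psi(c_b)$. We want to show that two samples in a batch then land on the same point with positive probability. Any batch containing one sample from each of two intents in the same fiber of $\psi$ has nearest-neighbour distance $0$, so its $-\log$ term is $+\infty$. If such batches occur with positive probability, then $\mathcal{L}_{unif}(g^*)=+\infty$.

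\textbf{Step 2 (injective probes are finite).} Conversely, take any $\psi$ that is injective on the support and has pairwise-separated images, for example an embedding of the finitely many intent classes as distinct points on the hypersphere. For such a $\psi$ the loss $\mathcal{L}_{unif}$ is finite. It is also compatible with $\mathcal{L}_{align}=0$, because the probe $g=\psi\circ(\text{recover } c \text{ from } h)$ is style-invariant.

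\textbf{Step 3 (conclusion).} A minimizer of $\mathcal{L}_{align}+\lambda\mathcal{L}_{unif}$ with $\lambda>0$ therefore cannot be non-injective. So $\psi$ is injective and thus invertible on its image. Combined with Lemma~\ref{lemma:invariance}, this gives Theorem~\ref{thm:identifiability}: $g^*(h)=\psi(c)$ with $\psi$ invertible.

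\textbf{Main obstacle: the positive-probability condition in Step 1.} The argument needs the fiber $\psi^{-1}(\psi(c_a))$ to carry positive $P(c)$-mass on at least two distinct intents, so that a collision actually appears in a batch. This is immediate when the intents form a discrete set, as in the finite query sets $\mathcal{D}_{harm}\cup\mathcal{D}_{safe}$ used here, and I would state the result in that setting.

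For a continuous $\mathcal{C}$ the step is genuinely harder. A measure-zero collision does not change an expectation, so it does not make $\mathcal{L}_{unif}$ infinite. In that case I would instead appeal to the asymptotic characterization of uniformity losses: in the large-batch limit their minimizers push $P(c)$ forward to the uniform measure on the sphere. Invertibility would then have to come from additionally assuming that $\psi$ is continuous and that $\mathcal{C}$ has the same dimension as the feature space, which is an extra hypothesis I would state explicitly.

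A further, separate gap is that Step 1 relies on exact minimization, i.e. $\mathcal{L}_{align}=0$. The argument therefore covers only global minimizers and makes no claim about probes obtained by approximate optimization.
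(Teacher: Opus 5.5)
Your log-barrier route differs from the paper's, but as written it fails at Step~2. Under the model you state, the batch intents $c_1,\dots,c_B$ are drawn from a discrete $P(c)$. So for $B\ge 2$, two entries share the \emph{same} intent with probability at least $\sum_c P(c)^2>0$. This happens with probability one once $B$ exceeds the number of atoms, e.g.\ if $c$ is a coarse harmful/safe label, as your phrase ``finitely many intent classes'' suggests. For such a pair, $\psi(c_i)=\psi(c_j)$ holds for \emph{every} $\psi$, so the nearest-neighbour distance is $0$ and $\mathcal{L}_{unif}=+\infty$ for injective maps as well. Your Step~1 reasoning applies verbatim without any non-injectivity. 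Step~2's finiteness claim is therefore false, and Step~3 ends up comparing $+\infty$ with $+\infty$. A constant $\psi$ is then as much a minimizer as any injective one, so the barrier alone does not discriminate.

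To repair it, fix a batch model with pairwise-distinct intents. Identify $c$ with the query; this is effectively what Stage~1 does, since $\mathcal{L}_{align}$ only ties views of one query. Draw $B\ge 2$ distinct queries without replacement, so that every pair of intents co-occurs with positive probability, and use that the features are normalized. Then an injective $\psi$ on the finite support has minimal separation $\delta>0$, which gives $-\log 2\le\mathcal{L}_{unif}\le-\log\delta$. Any collision $\psi(c_a)=\psi(c_b)$ with $c_a\neq c_b$ gives $+\infty$, and your contradiction goes through.

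You should also state two assumptions you use silently. First, ``recover $c$ from $h$'' in Step~2 needs $f(c,s)=f(c',s')\Rightarrow c=c'$ on the support. Otherwise no style-invariant probe can be injective and the lemma fails. Second, Step~3 should be phrased for minimizers of $\mathcal{L}_{unif}$ over style-invariant probes, which is what the lemma asserts. Nothing you show forces a joint minimizer of $\mathcal{L}_{align}+\lambda\mathcal{L}_{unif}$ to have $\mathcal{L}_{align}=0$.

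By comparison, the paper argues at the population level. It reads KoLeo as entropy maximization and uses $H(\psi(C))\le H(C)$, with equality iff $\psi$ is injective on the support. That argument is insensitive to repeated draws but heuristic: KoLeo estimates differential entropy, while the injectivity fact concerns discrete entropy. Your repaired finite-batch argument, by contrast, is an actual proof for finitely many intents.
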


\begin{proof}
    The KoLeo uniformity loss (and asymptotically, the maximization of differential entropy) minimizes the interaction energy between points. It is theoretically minimized when the distribution of outputs $z = g(h)$ is the Uniform distribution on the feature manifold (e.g., the hypersphere).

Recall that $z = \psi(c)$. The entropy of the output variable $Z$ is related to the input $C$ by:
\begin{equation}
    H(Z) = H(\psi(C))
\end{equation}
From information theory, the entropy of a deterministic function of a random variable is maximized if and only if the function is \textbf{injective} (one-to-one).
\begin{itemize}
    \item If $\psi$ maps two distinct intents $c_1 \neq c_2$ to the same point $z$ (feature collapse), the entropy $H(Z)$ strictly decreases because information about the distinction between $c_1$ and $c_2$ is lost.
    \item To minimize $\mathcal{L}_{unif}$ (which is equivalent to maximizing entropy $H(Z)$), the optimal $\psi^*$ must preserve all distinctions present in the input distribution $P(c)$.
\end{itemize}

Therefore, for distinct intents $c_i \neq c_j$, we must have $\psi^*(c_i) \neq \psi^*(c_j)$. This implies $\psi^*$ is an invertible mapping.
\end{proof}

\subsection{Conclusion}

Combining Lemma \ref{lemma:invariance} and Lemma \ref{lemma:injectivity}:
1.  From Alignment: $g^*(f(c, s)) = \psi(c)$ (The probe ignores style).
2.  From Uniformity: $\psi(c)$ is invertible (The probe distinguishes distinct intents).

Thus, the learned representation $z = g^*(h)$ is isomorphic to the true latent intent $c$. We can recover the intent via $c = \psi^{-1}(z)$. This completes the proof that minimizing the proposed objective guarantees the identifiability of the latent intent.

\section{Detailed Elucidation of Causal Assumptions}
\label{app:causal_assumptions}

In Section 4, we invoked Theorem 1 to guarantee the identifiability of the latent intent variable $c$. This theorem relies on two structural assumptions regarding the data generation process: \textbf{Independence} and \textbf{Connectivity}. Here, we provide a detailed, intuitive explanation of why these assumptions are rigorously satisfied in our TSC-GRPO framework.

\subsection{The Independence Assumption ($c \perp s$)}
\label{app:independence}

\textbf{Intuition: Breaking Spurious Correlations.}
In the natural distribution of language (and in standard pre-training data), Content ($c$) and Style ($s$) are often highly correlated. For example, a polite prefix like ``Sure, here is'' ($s_{compliant}$) is statistically strongly correlated with helpful/safe content ($c_{safe}$). Conversely, refusals are correlated with harmful content. This statistical dependency is precisely why shallow alignment fails: the model learns to rely on the style $s$ as a shortcut to predict the safety of $c$.

The Independence Assumption requires us to break this correlation, rendering the style variable $s$ statistically uninformative about the content $c$. In other words, knowing the prefix is ``Sure'' should give the model zero information about whether the intent is harmful or safe.

\textbf{Why it is satisfied by design?}
In our framework, this assumption is not merely an observation; it is a structural property enforced by construction via our data augmentation strategy. As detailed in Stage 1, we explicitly construct a balanced $2 \times 2$ contrastive matrix:

By feeding the probe samples from all quadrants of Table \ref{tab:independence_matrix}, we force the conditional probability $P(Intent | Style)$ to approach the marginal probability $P(Intent)$.
\begin{equation}
    P(c_{harm} | s_{sure}) \approx P(c_{harm} | s_{raw}) \approx P(c_{harm})
\end{equation}
Because we artificially attach compliant prefixes to harmful queries (Malicious Compliance) and partial prefixes to safe queries, the probe cannot minimize the loss by relying on the prefix. It is mathematically compelled to look deeper for the invariant features of $c$, thus satisfying the independence assumption $c \perp s$.

\subsection{The Connectivity Assumption (Augmentation Graph)}
\label{app:connectivity}

\textbf{What is the Augmentation Graph?}
The Augmentation Graph $\mathcal{G_S}$ is a conceptual graph where:
\begin{itemize}
    \item \textbf{Nodes} represent different ``views'' or realizations of the same underlying intent (e.g., the raw query $x_{raw}$, the jailbroken query $x_{sure}$, the adversarial query $x_{adv}$).
    \item \textbf{Edges} represent the ``positive pair'' relationships. If our training objective pulls two views together (treating them as having the same intent), an edge exists between them.
\end{itemize}
The Connectivity Assumption states that this graph must be connected. This means there must be a path between any two styles $s_i$ and $s_j$. If the graph were disconnected (e.g., two isolated islands), the probe could learn two separate, unrelated representations for the same intent, failing to unify them into a single concept $c$.

\textbf{Why it is satisfied: The Star Topology.}
In the context of Contrastive Learning and our method, this assumption is trivially satisfied because our augmentation strategy follows a \textbf{Star Topology} (or Hub-and-Spoke model).

Consider a specific harmful intent $c$ (e.g., ``bomb making''). The ``Center Node'' (Hub) is the raw query $q$ (Type I). All other augmented views are generated directly from this hub:
\begin{itemize}
    \item Type II ($q \oplus \text{``Sure''}$) is a perturbation of $q$. $\to$ Edge $(x_{raw}, x_{sure})$.
    \item Type III ($q \oplus \delta_{adv}$) is a perturbation of $q$. $\to$ Edge $(x_{raw}, x_{adv})$.
    \item Type IV ($q \oplus y_{1:k}$) is an extension of $q$. $\to$ Edge $(x_{raw}, x_{partial})$.
\end{itemize}

Even if we do not explicitly pair Type II (Sure) and Type III (Adversarial) directly in a single triplet, they are linked via the Hub ($x_{raw}$).
\begin{equation}
    x_{sure} \leftrightarrow x_{raw} \leftrightarrow x_{adv}
\end{equation}
By transitivity, pulling $x_{sure}$ close to $x_{raw}$ and $x_{raw}$ close to $x_{adv}$ forces $x_{sure}$ and $x_{adv}$ to be close to each other.
Therefore, as long as all augmentations share the original query $q$ as their common source, the Augmentation Graph consists of a single connected component. There are no isolated subgraphs, guaranteeing that the probe learns a unified, globally consistent representation for the intent $c$.

\end{document}